\def\eqref#1{equation~\ref{#1}}
\def\1{\bm{1}}
\def\0{\bm{0}}
\DeclareMathAlphabet{\mathsfit}{\encodingdefault}{\sfdefault}{m}{sl}
\SetMathAlphabet{\mathsfit}{bold}{\encodingdefault}{\sfdefault}{bx}{n}
\newcommand{\E}{\mathbb{E}}
\newtheoremstyle{dotless}{}{}{\itshape}{}{\bfseries}{}{ }{}
\newtheorem{theorem}{Theorem}[section]
\newtheorem{claim}[theorem]{Claim}
\newtheorem{definition}[theorem]{Definition}
\theoremstyle{dotless}
\newcommand{\Ex}[2]{%
\ifthenelse{\isempty{#2}}{\mathop{\E}_{\substack{#1}}}{\mathop{\E}_{\substack{#1\\#2}}}}
\newcommand{\moe}{\text{MOE}}
\newcommand*\circled[1]{\tikz[baseline=(char.base)]{
            \node[shape=circle,draw,inner sep=2pt] (char) {#1};}}
\title{Rip van Winkle's Razor: A Simple Estimate of Overfit to Test Data}
\author[1,2]{Sanjeev Arora}
\author[1]{Yi Zhang}
\affil[1]{Department of Computer Science, Princeton University}
\affil[2]{Institute of Advanced Study}
\date{}
\begin{document}
\maketitle

\begin{abstract}
Traditional statistics forbids use of test data (a.k.a. holdout data) during training. Dwork et al. 2015 pointed out that current practices in machine learning, whereby researchers build upon each other's models, copying hyperparameters and even computer code---amounts to implicitly training on the test set. Thus error rate on test data may not reflect the true population error. This observation  initiated {\em adaptive data analysis}, which provides evaluation mechanisms with guaranteed upper bounds on this difference. With statistical query (i.e. test accuracy) feedbacks, the best upper bound is fairly pessimistic: the deviation can hit a practically vacuous value if the number of models tested is quadratic in the size of the test set.  

In this work, we present a simple new estimate, {\em Rip van Winkle's Razor}. It relies upon a new notion of \textquotedblleft information content\textquotedblright\ of a model: the amount of information that would have to be provided to an expert referee who is intimately familiar with the field and relevant science/math, and who has been just been woken up after falling asleep at the moment of the creation of the test data (like \textquotedblleft Rip van Winkle\textquotedblright\ of the famous fairy tale). This notion of information content is used to provide an estimate of the above deviation which is shown to be non-vacuous in many modern settings.  
\end{abstract}

\section{Introduction}

\label{sec:intro}

 \textquotedblleft Ye shall not train on the test set!\textquotedblright\ is a basic tenet of
data hygiene. It reminds us to  break available data into three parts: {\em training}, {\em validation} (to tune model parameters), and {\em holdout} or {\em test} (for evaluating final performance).
But as highlighted in well-known discussions about \textquote{p-hacking}~\cite{gerring2006case}, \textquote{publication bias}~\cite{mlinaric2017dealing}, \textquote{garden of forking paths}~\cite{gelman2013garden} etc., in practice experimenters use the data to test many models/hypotheses, some of which they came up with after testing earlier hypotheses. 
Dwork et al.~\cite{dwork2015preserving} argued that in modern machine learning (as well as other fields like Genomics), a similar phenomenon --- designing models using the test set--- may be happening. 

They were refering to the fact that machine learning research  is driven by a small number of publicly available datasets.  For example the famous ImageNet Large-Scale Visual Recognition Challenge (ILSVRC) dataset~\cite{russakovsky2015imagenet} consists of training/validation/test sets of size $1,200,000 / 50,000 / 100,000$ (image, class) pairs drawn from $1000$ classes. This dataset has been used in tens of thousands of research papers, and yet has not been refreshed since 2012. For purposes of this paper, error of a model on a data set is simply the fraction of data points on which the model provides the correct answer. Typically the net has more trainable parameters than the number of training data points and will attain error essentially $100\%$  on the training set.  The main metric of interest is the model's error on the full distribution of unseen images, which we refer to as {\em population error}. The difference between error on the training set and the population error is called the {\em generalization error}. In practice the population error is estimated by evaluating the model on images in the test set; we will refer to this as the {\em test error}.\footnote{On ImageNet, most papers developed and tested models on the validation set. In the context to this paper, we use the terms “validation set” and “test set” interchangeably for ImageNet.}

Tens of thousands of teams may work on the same dataset. Teams use automated search to try out a huge number of models and architectures (hyperparameters) and then only publish the best. Furthermore, teams borrow design ideas and even implementation codes from published works. Each published model carries over some information about the test set,  which is inherited in newer models. Over time the amount of this carried-over information can be nontrivial given the vast size of today's models and vast number of papers. This was pointed out in~\cite{dwork2015preserving}, which initiated {\em adaptive data analysis} to study the effects of this phenomenon on statistical estimation of model accuracy.

 To quantitatively understand the issue, let {\em meta-overfitting error} ($\moe$) denote the  difference between average error of a model on the test data points and the expected error on the full distribution. It is well-known that for a model trained without ever querying the test set, this scales as $1/\sqrt{N}$ where $N$ is the size of the test set. Now imagine an experimenter designs $t$ models without ever querying the test set, and then evaluates them in one go on the test set.  Standard concentration bounds imply that the maximum $\moe$ of the $N$ models scales as $O(\sqrt{\log(t)/ N})$. Asymptotically speaking, this error is benign even if the number of models $t$ is   (a small)  exponential in $N$. In machine learning  $N$ is typically at least ten thousand, so  there's little cause for concern. However, in adaptive data analysis the experimenter is allowed to run the first $i-1$ models on the test set before designing the $i$'th model. Here~\cite{dwork2015preserving} shows that $\moe$ of the $t$'th model can be as high as $\Omega(\sqrt{t/N})$, which raises the possibility that the test set could be essentially useless once $t > N$ ---which does hold given the total number of models being trained world-wide on popular datasets. Subsequent work in adaptive data analysis has somewhat increased this estimate of how large $t$ can be~\cite{dwork2015preserving, bassily2016algorithmic}, but the basic point is unchanged.

Thus a lingering suspicion arose that today's deep learning techniques may have \textquotedblleft overfitted to the test set\textquotedblright\ for standard vision datasets such as ImageNet and CIFAR10 and even Kaggle competitions. Recently researchers tried to subject this to empirical study~\cite{recht2018cifar,recht2019imagenet}. They created new test sets for ImageNet and CIFAR10 by carefully replicating the methodology used for constructing the original datasets. Testing famous published models of the past seven years on the new test sets, they found that the test error was higher by as much as $10$-$15\%$ compared to that on the original test set. On the face of it, this seemed to confirm a case of bad $\moe$ but the authors presented some evidence that the swing in test error was due to systemic effects during test set creating. For instance, a comparable swing happens also for models that predated the creation of ImageNet (and thus were not overfitted to the ImageNet test set). A followup study~\cite{roelofs2019meta} of a hundred Kaggle competitions used fresh, identically distributed test sets that were available from the official competition organizers. This study shows little evidence of substantial $\moe$ in Kaggle. 

While the work of~\cite{recht2018cifar,recht2019imagenet} is reassuring at some level, at another level it reminds us that creating a new test set is rife with systematic bias, and is even impossible in many settings where one is studying rare or one-time phenomena (e.g., stock prices). In such cases it is impossible to use a fresh test set to reassure oneself about low $\moe$, so the Dwork et al.\ program to understand $\moe$ remains important. 

\paragraph{This work.} We provide a new and simple upper bound on the $\moe$ that is much less pessimistic than prior estimates, and yields non-vacuous estimates for real-life values of $t, N$. The estimate is based upon a new consideration of \textquotedblleft information content\textquotedblright\ of a model which we call  {Rip van Winkle's Razor}. We work out a couple of examples to show how to apply it in concrete real-life settings. This kind of calculation is well within the expertise of most researchers and we hope it becomes a standard in the field.  

\subsection{Related Works}
\label{subsec:related}

After \cite{dwork2015preserving} identifed the test set overuse issue, the adaptive data analysis community has made attempts towards resolving the issue by designing new restricted ways of using test sets to rank models. Multiple mechanisms~\cite{dwork2015preserving, bassily2016algorithmic} have been shown to yield better upper bounds on $\moe$ scaling as $O(t^{\frac{1}{4}} / \sqrt{N})$. \cite{hardt2014preventing} proves any mechanism with better guarantee would be computationally inefficient to implement. \cite{blum2015ladder} managed to bypass the lower bound by revealing the test accuracy of only \emph{selected} models, and their proof is based an idea seemingly close to our notion of description length though drastically different in nature. These mechanisms are based on ideas borrowed from differential privacy~\cite{dwork2011differential}. However, these works do not attempt to explain the small $\moe$'s observed in real-world settings, where their mechanisms are not implemented. 

Researchers have also tried to give other explanations for why $\moe$ may be small in practice. \cite{zrnic2019natural} suggest that model creators may short memory ---specifically, new models may not tend to use much information from papers published more than a year or two ago ---and use this hypothesis to improve estimates of $\moe$. While  an interesting hypothesis about the sociology of the researchers, it seems ultimately untestable, and furthermore does not give guidance about how to avoid excessive $\moe$ or how to spot it when it happens. \cite{mania2019model} give a better calculation assuming a difficult-to-verify assumption that the mistakes made by models on hard examples are \emph{independent}. \cite{feldman2019advantages} demonstrates the advantage of multiclass classification for reducing $\moe$ on a certain unnatural distribution, but implication for real-world machine learning datasets such as the ImageNet Challenge remains unclear.

\section{Setup and informal version of result}
\label{sec:setup}

We introduce our results informally in this section, and postpone exact calculation  to next section. 
Our technique rests on estimating the information content of a model, a version of the familiar Occam’s razor. A rough idea is encapsulated in the following theorem (essentially folklore) connecting $\moe$ to description length of models. The proof is a standard exercise in concentration bounds.

\begin{theorem}[Informal Theorem, Folklore] \label{thm:informal}
If a model can be  described using $k$ bits, then with high probability over the choice of a test set of size $N$, its meta-overfitting error is at most $c\sqrt{k/N}$ where $c$ is some fixed small constant. 
\end{theorem}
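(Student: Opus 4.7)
My plan is to prove this via a standard union bound over the finite family of all models that admit a $k$-bit description. The key observation is that the description language (encoding scheme) must be fixed in advance, independently of the test set; once this is done, the set of all objects representable in $k$ bits has cardinality at most $2^k$, and this gives us a finite hypothesis class to which classical concentration arguments apply.

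First, I would set up the concentration bound for a single fixed model. For any model $M$ chosen without reference to the test set, the test error is an average of $N$ independent bounded random variables (indicators of whether $M$ errs on each test point), each with expectation equal to the population error of $M$. By Hoeffding's inequality,
\[
\Pr\bigl[\,|\moe(M)| > \epsilon\,\bigr] \;\leq\; 2\exp(-2N\epsilon^2),
\]
where the probability is over the random draw of the test set of size $N$.

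Next, I would take a union bound. Since every $k$-bit description corresponds to at most one model, the set $\gM_k$ of all models describable in $k$ bits satisfies $|\gM_k| \leq 2^k$. Applying the above tail bound simultaneously to every $M \in \gM_k$ gives
\[
\Pr\Bigl[\,\exists\, M \in \gM_k : |\moe(M)| > \epsilon\,\Bigr] \;\leq\; 2^{k+1}\exp(-2N\epsilon^2).
\]
Setting the right-hand side equal to a target confidence $\delta$ and solving yields $\epsilon = \sqrt{(k+1+\log(2/\delta))/(2N)}$, which is $O(\sqrt{k/N})$ for any fixed $\delta$ and absorbs into the constant $c$.

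The only subtle step, and the one I expect to require care in the formal version, is ensuring that the description scheme is genuinely independent of the test set so that the union bound is valid. This is where the \emph{Rip van Winkle} framing of the paper matters: the encoding must be intelligible to a referee who fell asleep before the test data was drawn, so the codebook is fixed ex ante and does not implicitly index into properties of the held-out sample. Once this independence is granted, the remainder is routine Hoeffding plus union bound, and the conclusion $\moe \leq c\sqrt{k/N}$ with high probability over the test set follows immediately.
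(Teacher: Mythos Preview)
Your proposal is correct and matches the paper's own proof essentially line for line: the paper's sketch says only ``Follows from standard concentration bounds, and trivial union bound over all $2^k$ models that can be described using $k$ bits,'' which is exactly the Hoeffding-plus-union-bound argument you wrote out. Your remark about the description scheme needing to be fixed independently of the test set is also on point and anticipates the role of the unbiased-referee definition that the paper introduces immediately after this theorem.
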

\begin{proof}(sketch) Follows from standard concentration bounds, and trivial union bound over all $2^k$ models that can be described using $k$ bits.\end{proof}

Though folklore, this upper bound on $\moe$ is usually believed to be vacuous (i.e., bigger than $1$) in machine learning. The reason is that the obvious choices for description length $k$ --e.g., number of model parameters, or length of the computer code used to produce it---give vacuous estimates because of the sheer size of today's deep models.  Below we give a non-vacuous formalization of description length. Then in Section~\ref{sec:result} we show that the description length of today's popular models is modest. 

\subsection{Description Length: Rip van Winkle's Razor} 
Our notion of description length exploits the fact that in order to be accepted by the scientific community, models have to be {\em reproducible} by journal (or conference) referees. Specifically, authors have to describe the training method in sufficient detail to allow the referees to reproduce the result using the (universally available) training and test datasets. It is especially relevant that while the number of model parameters is very large,  they are initialized using random numbers at start of training. Thus  referees will use their own random initialization and end up with very different parameters than the research team's model, despite having similar performance on the test set.  The training method is considered reproducible only if it works for most choices of the random initialization (say, at least $50\%$ of the choices). 
 
But a natural issue arises when estimating \textquotedblleft description length\textquotedblright :  it depends upon the referee. A referee ignorant of even basic  calculus might need a very long explanation; an expert referee with up to date knowledge  needs only a tiny one.  What referee can we assume? The answer is subtly different from the usual notion of a journal referee.
 
 \begin{definition}[Informed and Unbiased Referee] An {\em Informed} referee knows {\em everything} that was known to humanity (e.g., about deep learning, mathematics, optimization, statistics etc.) right up to the moment of creation of the held out set. An {\em Unbiased} referee is one who knows nothing discovered since that moment. 
 \end{definition}
 
 \begin{definition}
 The {\em description length} of a deep model  is the minimum number of bits in a description needed to allow an informed but unbiased  referee to reproduce the claimed result.  
 \end{definition}

 Notice, description length for an informed and unbiased referee can be used as $k$ in Theorem~\ref{thm:informal}. The reason is that  being unbiased the referee does not possess any information about the test set, and thus the test set could even be sampled after the referee receives the description.

\noindent{\bf Remarks:}
 \begin{enumerate}
\item	
 Requiring referees to be informed allows descriptions to be shorter. Requiring referees to be  unbiased requires longer descriptions but help rule out statistical contamination due to any interaction whatsoever with the test set. Informally, we can think of the referee as a “Rip van Winkle” figure: an infinitely well-informed researcher who went into deep sleep at the moment of creation of the test set, and has just been woken up to start reproducing the latest result. This is why we call our method the {\em Rip van Winkle's Razor.} Real life journal referees would simulate the idealized Rip van Winkle in their heads while perusing the description submitted by the researcher. 

 \item	To illustrate: momentum techniques in optimization were well-studied before the creation of ImageNet test set, so  Informed referees can be expected to understand a line like “Train with momentum 0.9.”  But a line like \textquoteleft Use Batch Normalization\textquotedblright\ cannot be understood by an  Unbiased referee since conceivably this technique (invented in \cite{ioffe2015batch}) might have become popular precisely because it leads to better performance on test sets of popular machine learning tasks. Thus a referee who knows about it is, for purposes of the Main theorem, not independent of the test set. 


 \item	Implementation details concerning the latest computer hardware need not be included in the description. The referee could use the description to produce a computer program for any computational hardware. Here we are assuming that hardware details only affect the training speed, and not the accuracy of the trained model. 

 \item	To shorten the description as much as possible (to reduce our estimate of $k$ for the main theorem) researchers are allowed to compress it non-destructively using any method that would make sense to Rip van Winkle, and the description of the compression method itself is not counted towards the description length -- provided the same method is used for all papers submitted to Rip van Winkle.  
To give an example, a technique appearing in a text known to Rip van Winkle could  be succinctly refered to using the book ISBN number and page number.
 \end{enumerate}
\section{Main Results}

\subsection{Notations}
We consider the standard formalization of supervised learning where we are presented with a test set of $S = \{(x_1, y_1), \dots (x_N, y_N)\}$ of $N$ data points sampled i.i.d. from a data distribution $\mathcal{D}$ on $\mathcal{X}\times\mathcal{Y}$. Let  $f : \mathcal{X} \rightarrow \mathcal{Y}$ be a classifier assigning a label from $\mathcal{Y}$ to every point in $\mathcal{X}$. We denote the population error of $f$ by $L_{\mathcal{D}}(f)=\mathbb{P}_{(x,y)\sim \mathcal{D}}\left[f(x)\neq y\right]$ and its test error on $S$ by $L_{S}(f)=\frac{1}{N}\sum_{i=1}^N \textbf{1}\{f(x_i)\neq y_i\}$. Now we formally define the aforementioned \emph{meta-overfitting error} as the difference between $L_{\mathcal{D}}(f)$ and $L_{S}(f)$.

Let $\mathcal{H}$ denote a set of classifiers on the domain $\mathcal{X}\times\mathcal{Y}$, and let $\{0,1\}^{\leq C}$ denote the set of all binary strings with at most $C$ bits.  We represent the \emph{informed} and \emph{unbiased} referee as a function $r:\{0,1\}^{\leq C}\rightarrow \mathcal{H}\cup\{\emptyset\}$. That is, given a binary description, the referee either outputs a classifier from $\mathcal{H}$ or $\emptyset$ when, for example, the referee does not understand the description. Crucially, the referee function $r$ is independent of the test set $S$ (unbiasedness). For a description $\sigma \in \{0,1\}^{\leq C}$, we denote its length by $|\sigma|$, and for a classifier $f$, we denote its minimum description length by $|f|_r:=\min\left\{ |\sigma| ~\middle| ~\sigma\in \{0,1\}^{\leq C} \ \text{and}\ r(\sigma)=f \right\}$. 

\subsection{Main Theorem}

\begin{theorem}
Let $r$ be the informed but unbiased refree and let $\mathcal{H}:=\left\{r(\sigma) \middle| \sigma\in\{0,1\}^{\leq C}\right\}$. Then for every confidence parameter $\delta>0$, with probability at least $1-\delta$ over the choice of the test set $S\sim \mathcal{D}^N$, we have that
\begin{align*}
\forall f\in\mathcal{H}, L_{D}(f) \leq L_S(f) + \sqrt{\frac{2\ln{2}\ p_f^* (1-p_f^*) \left( |f|_r + \log_2\left(C/\delta\right) \right)}{N}}
\end{align*}
where $\hat{p}_f:=L_{S}(f)$ and $p_f^*$ is the fixed point of the function $T(p) := \hat{p}_f + \sqrt{\frac{2\ln{2}\ p (1-p)\ \left( |f|_r + \log_2\left(C/\delta\right) \right)}{N}}$, i.e. $p_f^*=T(p_f^*)$.
\end{theorem}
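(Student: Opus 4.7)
The plan is to combine a variance-sensitive Bernoulli tail bound, applied pointwise to each classifier in $\mathcal{H}$, with a union bound whose confidence budget is weighted by description length, and then to invert the resulting implicit inequality via the fixed-point construction that appears in the theorem statement. Throughout, unbiasedness of $r$ is what licenses concentration: for any fixed $\sigma \in \{0,1\}^{\le C}$, the classifier $r(\sigma)$ is a deterministic function of $\sigma$ alone, hence independent of the draw of $S \sim \mathcal{D}^N$.

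First, fix $\sigma$. Then $L_S(r(\sigma))$ is the mean of $N$ i.i.d.\ Bernoulli indicators with expectation $L_D(r(\sigma))$, so by a one-sided Chernoff/Bernstein bound (in the clean variance-dependent form), for any $\delta_\sigma\in(0,1)$, with probability at least $1-\delta_\sigma$,
\[
L_D(r(\sigma)) \;\le\; L_S(r(\sigma)) + \sqrt{\frac{2\ln(1/\delta_\sigma)\,L_D(r(\sigma))(1-L_D(r(\sigma)))}{N}}.
\]
I then allocate $\delta_\sigma := \delta/(C\cdot 2^{|\sigma|})$, so that $\ln(1/\delta_\sigma) = \ln 2\,(|\sigma|+\log_2(C/\delta))$, and union-bound over all $\sigma\in\{0,1\}^{\le C}$. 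Since there are at most $2^k$ strings of length $k$, the total failure probability is at most $\sum_{k=0}^{C}2^k\cdot \delta/(C\cdot 2^k)\le \delta$. Minimizing $|\sigma|$ over the preimages of each $f\in\mathcal{H}$ replaces $|\sigma|$ with $|f|_r$, giving: with probability at least $1-\delta$, simultaneously for every $f\in\mathcal{H}$,
\[
L_D(f) \;\le\; T(L_D(f)), \qquad T(p) := L_S(f) + \sqrt{\frac{2\ln 2\,p(1-p)(|f|_r+\log_2(C/\delta))}{N}}.
\]

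Finally, I convert this self-referential inequality into the explicit fixed-point bound. Consider $g(p):=T(p)-p$ on $[0,1]$. Because $\sqrt{p(1-p)}$ is concave, $g$ is concave; moreover $g(0)=L_S(f)\ge 0$ and $g(1)=L_S(f)-1\le 0$. Hence $g$ has a unique zero $p_f^*\in[0,1]$, and by concavity $g(p)<0$ for every $p>p_f^*$. The previous step gave $g(L_D(f))\ge 0$, so $L_D(f)\le p_f^*$. Substituting back into $T$ and using $p_f^*=T(p_f^*)$ yields $L_D(f) \le L_S(f)+\sqrt{2\ln 2\,p_f^*(1-p_f^*)(|f|_r+\log_2(C/\delta))/N}$, which is the stated conclusion.

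The main obstacle I expect is the precise choice of concentration inequality in the first step: the standard form of Bernstein's inequality carries an extra lower-order additive term of order $\ln(1/\delta_\sigma)/N$, and obtaining the clean square-root form requires either the relative-entropy Chernoff bound combined with the lower bound $D(q\|p)\ge (q-p)^2/(2p(1-p))$, or an empirical-Bernstein/self-bounding variant that absorbs the extra term. A secondary technical point is confirming that the fixed point $p_f^*$ is well-defined and lies in $(L_S(f),1]$ in the non-degenerate regime where $T$ is not identically above the diagonal; the concavity of $g$ selects the right root in this regime, and the bound becomes vacuous (but still correct) when the discriminant would otherwise exceed $1$.
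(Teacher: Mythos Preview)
Your proposal is correct and follows essentially the same approach as the paper: a variance-sensitive Chernoff bound (which the paper obtains via the KL lower bound $D(p-\varepsilon\|p)\ge \varepsilon^2/(2p(1-p))$, exactly the route you anticipate), a description-length-weighted union bound, and then a fixed-point inversion to replace $p_f$ by $p_f^*$. Your concavity argument for the fixed point is in fact cleaner than the paper's, which reasons informally via iteration from $p_f\le 0.5$ and a quadratic; the only nit is that your union-bound sum $\sum_{k=0}^{C}\delta/C$ equals $(C+1)\delta/C$ rather than $\delta$, which is trivially repaired by summing from $k=1$ (as the paper does).
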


\begin{proof}
We first prove a generalization result relying on the knowledge of the population error $p_f:=L_D(f)$, then we show how to adapt it to depend on only the observed test error $\hat{p}_f:=L_S(f)$.

To prove the first result, we decompose $\mathcal{H}$ into disjoint subsets: $\mathcal{H} = \cup_{i=1}^C \mathcal{H}_s $ where $\mathcal{H}_s:=\{f\in\mathcal{H}: |f|_r = s\}$. For a fixed $s$, we use the lower tail Chernoff bound~\ref{claim:chernoff} and union bound over all classifiers in $\mathcal{H}_s$ and conclude that
\begin{align*}
\forall f\in\mathcal{H}_s, \mathbb{P}\left[\frac{1}{N}\sum_{i=1}^N \textbf{1}\{f(x_i)\neq y_i\}\leq p_f - \varepsilon \right] \leq 2^s \exp\left(-\frac{N\varepsilon^2}{2p_f(1-p_f)}\right)
\end{align*}
It follows that for a fixed $s$, we have, with probability $\geq 1- \delta/C$,
\begin{align*}
\forall f\in\mathcal{H}_s, L_{D}(f) \leq L_S(f) + \sqrt{\frac{2\ln{2}\ p_f(1-p_f) \left( s + \log_2\left(C/\delta\right) \right)}{N}}
\end{align*}
We complete the first part with a union bound over all integer values of $s$ between $1$ and $C$.

In the following, we show how to make the generalization bound depend on $\hat{p}_f$ instead of $p_f$. The intuition is that we can always start with a trivial estimate, for instance, $p_f\leq 0.5$. The first part shows that, with probability $1-\delta$, we have $p_f\leq T(0.5)$. As long as $T(0.5)\leq0.5$, we have obtained a better estimates. Then we can repeat the process to get better and better estimates. This means in order to obtain the tightest upper bound on $p_f$, we seek to find the smallest $p$ such that $T(p)\leq p$. The equation $T(p)= p$ can be transformed into a quadratic equation which always has two distinct real roots $0<p_1<p_2$, but only one of the roots satisfy the sanity check $p_2\geq\hat{p}_f$. Thus $p_2$ is the unique solution to the equation $T(p)=p$, and $p_2=p_f^*$. Furthermore, it is easy to verify that $T(p) \leq p$ for all $p\geq p_f^*$. Thus $p_f^*$ is indeed the smallest $p$ such that $T(p)\leq p$. 

\end{proof}

\subsection{Meta-Overfitting Errors of ImageNet Models}
\label{sec:result}
We provide Rip van Winkle with the descriptions for reproducing two mainstream ImageNet models, ResNet-152~\cite{he2016deep} and DenseNet-264~\cite{huang2017densely}, in Appendix~\ref{sec:descriptions}, and then discuss in detail the strategy of estimating their lengths in Section~\ref{sec:encode}. Here we report the estimated description lengths of the models as well as the upper bounds on their population errors implied by our main theorem.

\paragraph{AlexNet as a baseline.} AlexNet's wide margin in the ImageNet LSVRC-2012 contest~\cite{krizhevsky2012imagenet} has been regarded as the starting point of our current deep learning revolution, and has significantly influenced the designs of later models. However, the development of AlexNet might in fact be independent of the ImageNet LSVRC-2012 contest dataset. The original paper with all the details of model architecture, data augmentation, training and testing procedures was published at NeurIPS 2012 whose paper submission deadline was on June 1st {\emph prior} to the starting date of the 2012 contest. Furthermore, the authors acknowledged in the paper that the model was developed on the ImageNet LSVRC-2010 dataset, and simply \textquotedblleft also entered a variant of this model in the ILSVRC-2012 competition\textquotedblright. If the procedures mentioned in their paper are indeed independent of the 2012 dataset, it will significantly reduce the description length, since ResNet and DenseNet inherited data augmentation and training/testing protocols from AlexNet. We present our results in Table~\ref{tab:result1} and ~\ref{tab:result2} with and without AlexNet as a baseline. These results are based on the parameter choice of $C = 5000, \delta = 0.05, N = 50000$.

\begin{table}[H]
\begin{center}
\begin{tabular}{|c|c|c|c|c|c|}
\hline
\multirow{2}{*}{Model} & \multirow{2}{*}{top-5 val error} & \multicolumn{2}{|c|}{w/ AlexNet as baseline} & \multicolumn{2}{|c|}{w/o  AlexNet as baseline}  \\\cline{3-6}
             &                                            &   desc. length &  our bound             & length (bit) & our bound\\\hline
ResNet-152   & 4.49        \%                             &  426 bits        &  7.39       \%                 &   729 bits      &      8.49\% \\\hline
DenseNet-264 & 5.29        \%                             &  362 bits         & 8.08 \%                            &        741 bits      &      9.55\% \\\hline
\end{tabular}
\caption{Description lengths and our upper bounds on {\em population error} with English counted as $1.0$ bits per character (Option 1 in Section~\ref{sec:english}).}
\label{tab:result1}
\end{center}
\end{table}\vspace{-1cm}
\begin{table}[H]
\begin{center}
\begin{tabular}{|c|c|c|c|c|c|}
\hline
\multirow{2}{*}{Model} & \multirow{2}{*}{top-5 val error} & \multicolumn{2}{|c|}{w/ AlexNet as baseline} & \multicolumn{2}{|c|}{w/o  AlexNet as baseline}  \\\cline{3-6}
             &                                            &   desc. length &  our bound             & length (bit) & our bound\\\hline
ResNet-152   & 4.49        \%                             &  556 bits        &  7.89       \%                 &   1032 bits      &      9.49\% \\\hline
DenseNet-264 & 5.29        \%                             &  454 bits         & 8.47 \%                            &        980 bits      &      10.35\% \\\hline
\end{tabular}
\caption{Description lengths and our upper bounds on {\em population error} with English counted as $10$ bits per word (Option 2 in Section~\ref{sec:english}).}
\label{tab:result2}
\end{center}
\end{table}%

\section{Counting Bits in Descriptions}
\label{sec:encode}
The descriptions we provide to Rip van Winkle consist of three types of expressions: English, mathematical equations and directed graphs (for network architectures). While the use of these expressions appears natural, systematically counting the number of bits entails clearly defined rubrics. In this section we present the rubrics we use to encode each type of expressions into binary strings and then estimate the description lengths.

\subsection{Description Length of English Expressions}
\label{sec:english}
English is natural for describing certain seemingly sophisticated deep learning practices, for example, data augmentation procedures and testing protocols. Universal binary encodings such as ASCII and Unicode are too wasteful for our purpose, due to their failure to capture the  regularity of natural languages. In this work, we adopt two alternative methods of length measuring at character level and word level respectively.

\paragraph{Option 1) Entropy Rate.} Entropy rate of a string reveals the minimal number of bits needed to encode each character in the string. Estimating the entropy rate of written English has been a long-standing subject in linguistics, dating back to Shannon's seminal work in 1951 where he proposed an estimate between $0.6$ and $1.3$ bits per character~\cite{shannon1951prediction}. More recent studies~\cite{mahoney1999refining, ren2019entropy} have managed to tighten the upper bound to $\sim 1.2$ bits per character. Note that these estimates are on the distribution of general written English, and English used in academic and technical writings usually have a significantly lower entropy rate. Indeed, the entropy rate of a corpus of wikipedia articles, namely the $\mathsf{enwik8}$ dataset, can be upper bounded by merely $0.99$ bits per character\footnote{See~\cite{braverman2020calibration} for how $\log(\text{test perplexity})$ of a trained model upper bounds the entropy rate of the corpus distribution.}, with the state-of-the-art deep learning language models~\cite{dai2019transformer}. We expect the English used in deep learning papers and particularly in our descriptions provided to Rip van Winkle, to exhibit a even lower entropy rate, compared to wikipedia articles. We leave precisely estimating the quantity to future works. For current purposes, we pick $1.0$ bits per character as the entropy rate of the English expressions appearing in our descriptions. 

\paragraph{Option 2) Simplified Technical English.} According its official website~\footnote{http://www.asd-ste100.org/}, \textquotedblleft Simplified Technical English (STE), is a controlled language developed in the early 1980s to help the users of English-language maintenance documentation understand what they read. It was initially applicable to commercial aviation. Then, it became also a requirement for Defence projects, including Land and Sea vehicles. As a consequence, today, primary texts of maintenance manuals are mostly written in STE\textquotedblright. ~Its $2005$ version (ASD-STE100) contains a set of restrictions on the grammar and style of procedural and descriptive text, as well as a dictionary of 875 approved general words. We argue that it is possible to create an equivalent version of Simplified Technical English for deep learning that contains no more than 875 words, based on a plausible belief that training a deep neural network is at least as simple as flying a commercial jetliner, if not simpler. We leave precisely constructing the deep learning counterpart to future work, but provide a partial vocabulary in Section\ref{sec:vocab}. For now, we pick $\lceil \log_2 875 \rceil=10$ bits per world as the description length for each English word in our description.  

\subsection{Description Length of Mathematical Equations}
We estimate the description length of math equations by first converting them into directed computation graphs, and then count the number of bits needed to represent the graph. In the following, we present the details using a concrete example.
\paragraph{Example: batch-normalization.} Batch-normalization is an essential technique ubiquitous in modern architectures, invented after the creation of the ImageNet test set. The mathematical operation within a batch-normalization layer can be precisely defined as the below equation:
\begin{align*}
 \mathsf{BN}(x) := b + g \cdot (x - \mu) / \sqrt{\sigma^2 + 0.01}
\end{align*}
together with specifications that the layer takes a batch as input, and $x$ denotes the activation of a node, $\mu, \sigma^2$ are the batch mean and variance of $x$, and $b, g$ are trainable scalars tied to the node, initialized to $0, 1$. The equation can be readily translated into the directed graph shown in Figure~\ref{fig:bn}.

\begin{figure}[h]
\centering

\begin{tikzpicture}[
rectnode/.style={rectangle, draw=black, thin,  minimum size=5mm},
layernode/.style={rectangle, draw=black, thin,  minimum size=5mm},
circlenode/.style={circle, draw=black, thin, inner sep=0pt,  minimum size=5mm},
align=center,node distance=0.5cm,
]

\node[rectnode, fill=red!9]   (x)           {$x$};
\node[rectnode]   (mean)      [below=of x] {$\mu$};
\node[circlenode] (subtract) [right=1cm of $(x)!0.5!(mean)$]    {$-$};

\node[rectnode]   (sigma2)    [below=of mean]     {$\sigma^2$};
\node[circlenode] (plus1)      [right=of sigma2]   {$+$};
\node[rectnode]   (const)     [below=of plus1]     {$0.01$};  
\node[circlenode] (sqrt)      [right=of plus1]     {\small sqrt};
\node[circlenode] (divide)    [right=1cm of $(subtract)!0.5!(sqrt)$] {$\div$};
\node[circlenode] (mult)      [right=of divide]   {$\times$};
\node[rectnode]   (g)         [below=of mult]     {$g$};
\node[circlenode]  (plus2)     [right=of mult]     {$+$};
\node[rectnode]   (b)         [above=of plus2]     {$b$}; 
\node[rectnode, fill=blue!9]  (out) [right=1cm of plus2] {output};

\draw[->, thick]  (x) -- (subtract) node[pos=0.8, above] {$1$};
\draw[->, thick]  (mean) -- (subtract) node[pos=0.8, below] {$2$};
\draw[->, thick]  (subtract) -- (divide) node[pos=0.8, above] {$1$};
\draw[->, thick]  (sigma2) -- (plus1);
\draw[->, thick]  (const) -- (plus1);
\draw[->, thick]  (plus1) -- (sqrt);
\draw[->, thick]  (sqrt) -- (divide) node[pos=0.8, below] {$2$};
\draw[->, thick]  (divide) -- (mult);
\draw[->, thick]  (mult) -- (plus2);
\draw[->, thick]  (b) -- (plus2);
\draw[->, thick]  (g) -- (mult);
\draw[->, thick]  (plus2) -- (out);
\end{tikzpicture}
\caption{Computation graph of batch-normalization. Variables and math operators are represented by squares and circles. The numbers on the edges suggest the order of arguments. The red and blue indicate the input and output variables.} 
\label{fig:bn}
\end{figure}
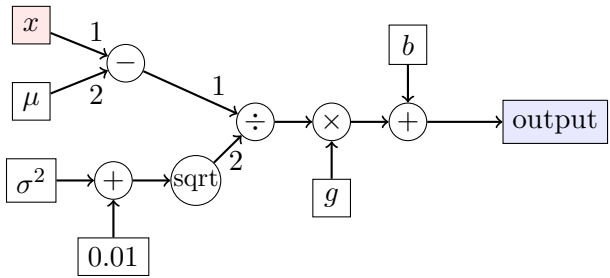

The graph consists of seven variable vertices (square), five operator vertices (circle), and twelve directed edges (arrow). With $7+5=12$ vertices in total, we assign each vertex a $\lceil\log_2 12\rceil=4$-bit index. The input and output vertices are assigned the lowest and highest index number. Because this graph is sparsely connected, we may encode the edges using one list for each vertex containing all the indices of vertices it is connecting into. For order sensitive operators that require two arguments (i.e. division, subtraction), we may use an one bit suffix for each such edge to suggest the order. In this way the edges can be represented with $12\times5=60$ bits.

In addition to edges, we also need to specify the meaning of vertices. For the ones that are simply abstract symbols (i.e. $x$, $g$ and $b$), we leave them as they are. For the remaining vertices, we make a legend where each vertex index is associated to a binary representation. For the vertex with the scalar $0.01$, we use its $8$-bit floating point representation. For the operator vertices, they are among the $25 ~\textbf{Math Operations}$ in Rip van Winkle's vocabulary in Section~\ref{sec:vocab}. Thus we may encode every operator from the list using $\lceil\log_2 25\rceil=5$ bits. Thus the legend contains $8+5\times5=33$ bits. Overall, we can represent the computation graph of batch-normalization using only $\approx100$ bits.

\subsection{Describing Network Architectures}
Deep learning architectures are naturally represented as directed graphs, thus we adopt the same strategy as used to encode math equations. For instance, in Figure~\ref{figure:residual}, we describe the essential ingredient in ResNet architectures\textemdash residual block~\cite{he2016deep}. 

\begin{figure}[h]
\centering
\begin{subfigure}{0.3\textwidth}
\centering
\begin{tikzpicture}[
rectnode/.style={rectangle, draw=black, thin,  minimum size=5mm},
layernode/.style={rectangle, draw=black, thin,  minimum size=5mm},
circlenode/.style={circle, draw=black, thin,  minimum size=1mm},
align=center,node distance=0.3cm,
]
\small
\node[]      (input)           {};
\node[rectnode] (bn1)      [below=of input]      {$\mathsf{BN}$};
\node[rectnode] (relu1)      [below=of bn1]      {$\mathsf{ReLU}$};
\node[rectnode]    (conv1)           [below=of relu1] {$\mathsf{Conv}(1\text{x}1, 4k)$};
\node[rectnode] (bn2)      [below=of conv1]      {$\mathsf{BN}$};
\node[rectnode] (relu2)      [below=of bn2]      {$\mathsf{ReLU}$};
\node[rectnode]    (conv2)           [below=of relu2] {$\mathsf{Conv}(3\text{x}3, k)$};
\node[]             (output)            [below=of conv2]    {};

\draw[->, thick]  (input) -- (bn1);
\draw[->, thick] (bn1) -- (relu1);
\draw[->, thick] (relu1) -- (conv1);
\draw[->, thick] (conv1) -- (bn2);
\draw[->, thick] (bn2) -- (relu2);
\draw[->, thick] (relu2) -- (conv2);
\draw[->, thick] (conv2) -- (output);
\end{tikzpicture}
\caption{}
\end{subfigure}%
\begin{subfigure}{.3\textwidth}
\centering
\begin{tikzpicture}[
downsamplenode/.style={rectangle, draw=black, thin, minimum size=5mm},
layernode/.style={rectangle, draw=black, thin, minimum size=5mm},
addnode/.style={circle, draw=black, thin, minimum size=5mm},
align=center,node distance=0.3cm,
]
\small
\node[]      (input)           {};
\node[downsamplenode] (downsample)      [below=of input]      {$\mathsf{downsample}(s)$};
\node[layernode]    (layer_1)           [below=of downsample] {$\mathsf{Layer}(1\text{x}1, k, s)$};
\node[layernode]    (layer_2)           [below=of layer_1]    {$\mathsf{Layer}(3\text{x}3, k, 1)$};
\node[layernode]    (layer_3)           [below=of layer_2]    {$\mathsf{Layer}(1\text{x}1, 4k, 1)$};
\node[addnode]      (add)               [below=of layer_3]    {$+$};
\node[]             (output)            [below=of add]    {};
\node[] (phantom_1) [right=of downsample] {};
\node[] (phantom_2) at (add -| phantom_1) {};

\draw[->, very thick]  (input) -- (downsample);
\draw[->, very thick] (downsample) -- (layer_1);
\draw[->, very thick] (layer_1) -- (layer_2);
\draw[->, very thick] (layer_2) -- (layer_3);
\draw[->, very thick] (layer_3) -- (add);
\draw[->, very thick] (layer_3) -- (add);
\draw[->, very thick] (add)  -- (output);
\draw[very thick]  (downsample) -- (phantom_1);
\draw[very thick]  (phantom_1.west) -- (phantom_2.west);
\draw[->, very thick]  (phantom_2) -- (add);
\end{tikzpicture}
\caption{}
\end{subfigure}
\caption{a)~$\mathsf{Layer}(k, s)$ and b)~$\mathsf{block}(k,s)$. $k$ is the parameter determining the number of output channels, and $s$ denotes the stride of the down-sampling layer and the first convolutional layer.}
\label{figure:residual}
\end{figure}
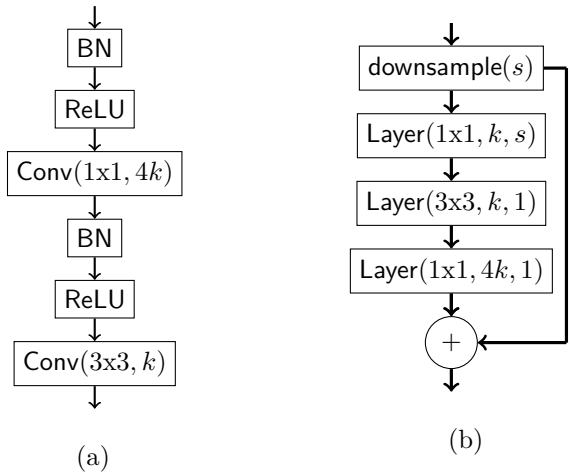

We first define the atomic structure $\mathsf{Layer}$ using $\mathsf{BN}$ defined in the previous section, and primitive layers, namely $\mathsf{ReLU}$ and $\mathsf{Conv}$, from the list of $10~\textbf{Deep Learning Layers}$. Then we build a residual block shown on the right using multiple $\mathsf{Layer}$s, one $\mathsf{downsample}$ layer, and one skip-link. The entire network architecture can be recursively described, and the description length can be calculated in a likewise manner as in the previous section. Rip van Winkle's vocabulary contains $10$ primitive layer types, thus each type may be specified using $\lceil\log_2 10 \rceil = 4$ bits. However, in addition to specifying the type, it takes several extra bits to also encode the hyper-parameters of a layer, including filter size, number of output channels, stride size and so forth.

With the description of a $\mathsf{Block}$, the forward pass of a ResNet-152 now can be readily represented as a chain with $k=32$: $\mathsf{Conv}(7, 64, 2) ~\rightarrow ~\mathsf{MaxPool}(3\times3, 2)~\rightarrow ~\mathsf{Block}(k, 2) \times 3 \rightarrow ~\mathsf{Block}(2k, 2) \times 8 ~\rightarrow ~\mathsf{Block}(4k, 2) \times 36 ~\rightarrow ~\mathsf{Block}(8k, 2)\times 3 \rightarrow ~\mathsf{AvgPool}(\text{global}) ~\rightarrow ~\mathsf{FullyConnected(1000)} ~\rightarrow ~\mathsf{SoftMax}$, where $\times$ denotes replication, i.e. $\mathsf{Block}(4k, 2) \times 36$ represents a string of $36 ~\mathsf{Block}$s with $4k$ output channels and stride $2$. The forward pass chain consists of $5$ primitive layers from Rip van Winkle's \textbf{Deep Learning Layers}, $4$ types of $\mathsf{Block}$s, $16$ hyper-parameters, plus $9$ edges. Thus overall, the forward pass requires $4\times5 + 39 + 3\times9=86$ bits.
\subsection*{Acknowledgement}
Sanjeev Arora acknowledges funding from the NSF, ONR, Simons Foundation, Schmidt Foundation, DARPA and SRC. Yi Zhang acknowledges the support from the Wallace Memorial Fellowship.

\bibliographystyle{plain}
\bibliography{main}

\appendix
\section{Technical Preliminaries}
\begin{claim}
\label{claim:kl_lowerbound}
For any $p\in\left(0, \frac{1}{2}\right]$ and any $\varepsilon\in[0, p)$, we define function $\textrm{KL}\left(p-\varepsilon\middle|| p\right):= (p-\varepsilon)\log\left(\frac{p-\varepsilon}{p}\right) + (1-p+\varepsilon)\log\left(\frac{1-p+\varepsilon}{1-p}\right)$, and then we have
\begin{align*}
\textrm{KL}\left(p-\varepsilon\middle|| p\right)\geq\frac{\epsilon^2}{2p(1-p)}
\end{align*}
\end{claim}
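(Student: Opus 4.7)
The plan is to prove the inequality by showing that the auxiliary function
\[
f(\varepsilon) \;:=\; \textrm{KL}(p-\varepsilon \,\|\, p) - \frac{\varepsilon^2}{2p(1-p)}
\]
is nonnegative on $[0, p)$. The strategy follows the standard template for a sharp Pinsker-type bound: verify that $f$ and $f'$ both vanish at $\varepsilon = 0$, then prove $f$ is convex on the interval, so that two successive integrations of $f'' \geq 0$ yield $f \geq 0$.

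The first two steps are routine. Plugging in $\varepsilon=0$ gives $\textrm{KL}(p \,\|\, p) = 0$ and $\varepsilon^2/(2p(1-p)) = 0$, so $f(0) = 0$. Differentiating the KL expression in $\varepsilon$ with $p$ held fixed, the terms produced by the product rule cancel and leave
\[
\frac{d}{d\varepsilon}\textrm{KL}(p-\varepsilon \,\|\, p) \;=\; -\log\frac{p-\varepsilon}{p} + \log\frac{1-p+\varepsilon}{1-p},
\]
which equals $0$ at $\varepsilon = 0$; together with the fact that the derivative of $\varepsilon^2/(2p(1-p))$ vanishes there too, we get $f'(0) = 0$.

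The substantive step is establishing $f''(\varepsilon) \geq 0$ on $[0, p)$. A second differentiation gives
\[
\frac{d^2}{d\varepsilon^2}\textrm{KL}(p-\varepsilon \,\|\, p) \;=\; \frac{1}{p-\varepsilon} + \frac{1}{1-p+\varepsilon} \;=\; \frac{1}{(p-\varepsilon)(1-p+\varepsilon)},
\]
so the inequality $f''(\varepsilon) \geq 0$ is equivalent to $(p-\varepsilon)(1-p+\varepsilon) \leq p(1-p)$ on the interval. The quadratic $q(\varepsilon) := (p-\varepsilon)(1-p+\varepsilon)$ satisfies $q'(\varepsilon) = 2p - 1 - 2\varepsilon$ and $q''(\varepsilon) = -2 < 0$; under the hypothesis $p \leq 1/2$, the apex $\varepsilon = p - 1/2$ lies at or to the left of $0$, so on $[0, p)$ the quadratic is strictly decreasing, giving $q(\varepsilon) \leq q(0) = p(1-p)$ as required. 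Combining $f(0) = 0$, $f'(0) = 0$, and $f'' \geq 0$ on $[0, p)$ yields $f(\varepsilon) \geq 0$ throughout the interval, which is the claim.

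\textbf{Main obstacle.} The only delicate point is the careful use of the hypothesis $p \leq 1/2$: without it, the quadratic $q$ first increases and then decreases on $[0, p)$, so $f''$ can become negative near $\varepsilon = 0$ and the convexity-based comparison breaks. The restriction $p \leq 1/2$ in the statement is therefore essential and not cosmetic. Everything else is a bookkeeping calculation with derivatives, so I expect no further surprises.
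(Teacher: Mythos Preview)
Your proof is correct and follows essentially the same approach as the paper: both compute $g''(\varepsilon)=\frac{1}{(p-\varepsilon)(1-p+\varepsilon)}$, use $g(0)=g'(0)=0$, and rely on the key pointwise inequality $(p-\varepsilon)(1-p+\varepsilon)\le p(1-p)$ for $p\le 1/2$, $\varepsilon\ge 0$. The only cosmetic difference is that the paper integrates $g''$ twice and bounds the integrand directly, while you subtract $\varepsilon^2/(2p(1-p))$ first and argue convexity of the difference; these are the same argument in slightly different packaging.
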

\begin{proof} We write $g(\varepsilon):=\textrm{KL}\left(p-\varepsilon\middle|| p\right)$, and note that $g(0)=g'(0)=0$ and $g''(\varepsilon)=\frac{1}{(p-\varepsilon)(1-p+\varepsilon)}$. Thus for any $p\in\left(0, \frac{1}{2}\right]$ and any $\varepsilon\in[0, p)$, 
\begin{align*}
g(\varepsilon) = \int_0^{\varepsilon} \int_0^{u} g''(\tau) \mathrm{d}\tau \mathrm{d}u = \int_0^{\varepsilon} \int_0^{u} \frac{1}{(p-\tau)(1-p+\tau)} \mathrm{d}\tau \mathrm{d}u \geq \int_0^{\varepsilon} \int_0^{u} \frac{1}{p(1-p)} \mathrm{d}\tau \mathrm{d}u = \frac{\epsilon^2}{2p(1-p)}
\end{align*}

\end{proof}

\begin{claim}[Chernoff Bound, lower tail]
\label{claim:chernoff}
Suppose $X_1, X_2, \ldots, X_N$ are independen Bernoulli variables with $\E[X_i]=p$ for all $i$. Then for any $\varepsilon\in[0,p)$,
\begin{align*}
\mathbf{P}\left[ \frac{1}{N}\sum_{i=1}^N X_i \leq p - \varepsilon\right]\leq \exp\left(-\frac{N\varepsilon^2}{2p(1-p)}\right)
\end{align*}
\end{claim}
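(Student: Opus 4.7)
The plan is to prove this via the standard Cramér–Chernoff exponential-moment method, obtain the sharp KL-divergence exponent, and then invoke Claim~\ref{claim:kl_lowerbound} to replace the KL quantity with the quadratic lower bound $\varepsilon^2/(2p(1-p))$ that appears in the statement.

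First I would set $S_N := \sum_{i=1}^N X_i$, fix some parameter $t > 0$ to be tuned later, and apply Markov's inequality to the non-negative random variable $\exp(-t S_N)$:
\[
\mathbf{P}\!\left[\tfrac{1}{N}S_N \le p-\varepsilon\right]
\;=\;\mathbf{P}\!\left[e^{-tS_N} \ge e^{-tN(p-\varepsilon)}\right]
\;\le\; e^{tN(p-\varepsilon)}\,\mathbb{E}\!\left[e^{-tS_N}\right].
\]
Because the $X_i$ are i.i.d.\ Bernoulli$(p)$, the moment generating factor splits as $\mathbb{E}[e^{-tS_N}] = (p e^{-t} + (1-p))^N$, so the bound becomes $\bigl(e^{t(p-\varepsilon)}(p e^{-t}+1-p)\bigr)^N$.

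Next I would optimize over $t>0$. Differentiating the log of the bracketed expression and setting the derivative to zero yields the optimal choice $e^{-t^\star} = \tfrac{(p-\varepsilon)(1-p)}{p(1-p+\varepsilon)}$. Substituting $t^\star$ back and simplifying (the algebra is routine if one groups the $(p-\varepsilon)$ and $(1-p+\varepsilon)$ terms together) collapses the exponent exactly to $-N\cdot \mathrm{KL}(p-\varepsilon \,\|\, p)$, where $\mathrm{KL}$ is the binary relative entropy defined in Claim~\ref{claim:kl_lowerbound}. This gives the classical tight form
\[
\mathbf{P}\!\left[\tfrac{1}{N}S_N \le p-\varepsilon\right] \;\le\; \exp\!\bigl(-N\,\mathrm{KL}(p-\varepsilon\,\|\,p)\bigr).
\]

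Finally, I would invoke Claim~\ref{claim:kl_lowerbound}, which states $\mathrm{KL}(p-\varepsilon \,\|\, p) \ge \varepsilon^2/(2p(1-p))$ for $p\in(0,1/2]$ and $\varepsilon\in[0,p)$, to weaken the exponent and obtain the target bound $\exp(-N\varepsilon^2/(2p(1-p)))$. For the regime $p\in(1/2,1)$ not directly covered by Claim~\ref{claim:kl_lowerbound}, I would note the symmetry: $\mathrm{KL}(p-\varepsilon\,\|\,p) = \mathrm{KL}(1-p+\varepsilon\,\|\,1-p)$, and applying the lemma at $1-p \in (0,1/2)$ gives the same quadratic lower bound $\varepsilon^2/(2p(1-p))$, so the stated bound holds uniformly in $p$. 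The main subtle step is the algebraic simplification at $t^\star$ into a clean KL exponent; everything else is mechanical once that identification is made.
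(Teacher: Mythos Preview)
Your core argument---Cram\'er--Chernoff exponential moments, optimize over the tilt parameter to obtain the KL exponent, then invoke Claim~\ref{claim:kl_lowerbound}---is exactly what the paper does; the only cosmetic difference is that the paper first substitutes $Z_i=1-X_i$ to turn the lower tail into an upper tail and applies Markov to $e^{\lambda Z}$, whereas you apply Markov directly to $e^{-tS_N}$, which is algebraically the same move.

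Your added symmetry step for $p\in(1/2,1)$, however, does not go through. The identity $\mathrm{KL}(p-\varepsilon\,\|\,p)=\mathrm{KL}(1-p+\varepsilon\,\|\,1-p)$ is correct, but with $p'=1-p$ this is $\mathrm{KL}(p'+\varepsilon\,\|\,p')$, the \emph{upper}-deviation KL; Claim~\ref{claim:kl_lowerbound} only lower-bounds the \emph{lower}-deviation form $\mathrm{KL}(p'-\varepsilon'\,\|\,p')$, so it is not applicable. In fact the inequality $\mathrm{KL}(p-\varepsilon\,\|\,p)\ge \varepsilon^2/(2p(1-p))$ that you are trying to extend is simply false for large $p$: at $p=0.9$, $\varepsilon=0.1$ one has $\mathrm{KL}(0.8\,\|\,0.9)\approx 0.044$ while $\varepsilon^2/(2p(1-p))\approx 0.056$. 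The paper's own proof just invokes Claim~\ref{claim:kl_lowerbound} and is silent about $p>1/2$, so strictly speaking both arguments establish the claim only for $p\le 1/2$---which is the regime that matters for the application, since $p$ there is a small top-5 error rate.
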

\begin{proof}
We introduce another $N$ Bernoulli variables $Z_i := 1-X_i$, and $Z:=\sum_{i=1}^n Z_i$ so that
\begin{align*}
\mathbf{P}\left[\frac{1}{N}\sum_{i=1}^N X_i \leq p - \varepsilon\right] = \mathbf{P}\left[\frac{1}{N}\sum_{i=1}^N Z_i \geq 1-p + \varepsilon\right] = \mathbf{P}\left[Z \geq N(1-p + \varepsilon)\right]
\end{align*}
For any $\lambda>0$, we have
\begin{align*}
\mathbf{P}\left[Z \geq N(1-p + \varepsilon)\right] = \mathbf{P}\left[e^{\lambda Z} \geq e^{\lambda N(1-p + \varepsilon)}\right] \leq \frac{\E\left[e^{\lambda Z}\right]}{e^{\lambda N(1-p + \varepsilon)}}.
\end{align*}
where we used Markov's inequality. The independence of $Z_i$'s yields
\begin{align*}
\E\left[e^{\lambda Z}\right] = \E\left[\prod_{i=1}^N e^{\lambda Z_i}\right] = \prod_{i=1}^N \E\left[e^{\lambda Z_i}\right] = \left((1-p)e^\lambda + p\right)^N
\end{align*}
Thus
\begin{align*}
\mathbf{P}\left[Z \geq N(1-p + \varepsilon)\right] \leq \left(\frac{(1-p)e^\lambda + p}{e^{\lambda (1-p + \varepsilon)}}\right)^N
\end{align*}
Minimizing the right hand side over $\lambda > 0$ we obtain
\begin{align*}
\mathbf{P}\left[ \frac{1}{N}\sum_{i=1}^N X_i \leq p - \varepsilon\right] \leq \left[\left(\frac{p}{p-\varepsilon}\right)^{p-\varepsilon} \left(\frac{1-p}{1-p+\varepsilon}\right)^{1-p+\varepsilon}\right]^N = e^{-N \cdot \textrm{KL}\left(p-\varepsilon\middle|| p\right)}
\end{align*}
Invoking claim~\ref{claim:kl_lowerbound} completes the proof.
\end{proof}
\section{Rip van Winkle's Vocabulary}
\label{sec:vocab}
Rip van Winkle's vocabulary consists of primitive operations/functions independent of the ImageNet test set. \textbf{Math Operations} are selected selected from Numpy's Mathematical
and Linear algebra routines~\cite{harris2020array}. All the other math operations in Numpy can be constructed from our selected ones. For instance, $\tan(x):=\mathsf{divide}(\mathsf{sin}(x), \mathsf{cos}(x))$, $\sinh(x):=\mathsf{divide}(\mathsf{subtract}(\mathsf{exp}(x), \mathsf{exp}(\mathsf{subtract}(0, x))), 2)$ and $\pi:=\mathsf{multiply}(2, \mathsf{asin}(1))$. The \textbf{Random Sampling Functions} are mostly used for describing network parameter initializations and data augmentation procedures. Notably, we include $\mathsf{SetRNGSeed}$ which specifies the seed of the Random Number Generator. \textbf{Neural Network Layers} and \textbf{Optimizers} contain typical deep neural network layer types and optimizers invented {\emph before} Rip Van Winkle was put into deep sleep at the year of $2012$\footnote{AdaDelta was published in December 2012, after the ImageNet 2012 competition. However, the original paper did not experiment on ImageNet}. Note that we are always allowed to introduce novel layers and optimizers using plain English and the operations listed here, as long as we account for their description lengths as well.

\begin{description}
\item[Math Operations.] \hfill \\
$\mathsf{add}, \mathsf{subtract},\mathsf{multiply}, \mathsf{divide}, \mathsf{mod}, \mathsf{sin}, \mathsf{arcsin},  \mathsf{exp}, \mathsf{log}, \mathsf{power}, \mathsf{round}, \mathsf{clip}, \mathsf{sqrt}, $\\
$\mathsf{abs}, \mathsf{sign}, \mathsf{max}, \mathsf{argmax}, \mathsf{dot}, \mathsf{matmul}, \mathsf{svd}, \mathsf{pseudo\mbox{-}inverse}, \mathsf{kronecker\mbox{-}product}, \mathrm{i}, \mathsf{Re}, \mathsf{Im}$.

\item[Random Sampling Functions.] \hfill\\
$\mathsf{N}(\mu, \Sigma)$, $\mathsf{Laplace}$, $\mathsf{Uniform}$, $\mathsf{Bernoulli}$, $\mathsf{Beta}$, $\mathsf{Multinomial}$, $\mathsf{Poisson}$, $\mathsf{RandInt}$, $\mathsf{SetRNGSeed}$.

\item[Tensor Operations:] \hfill\\
$\mathsf{index}$, $\mathsf{concat}$, $\mathsf{split}$, $\mathsf{reshape}$, $\mathsf{copy}$

\item[Neural Net Layers:] \hfill\\
$\mathsf{Conv}$(filter size, num output channels, stride), $\mathsf{FullyConnected}$(num output channels),  $\mathsf{ReLU}$, $\mathsf{Sigmoid}$, $\mathsf{Threshold}$, $\mathsf{SoftMax}$, $\mathsf{MaxPooling}$(filter size, stride), $\mathsf{AvgPooling}$(filter size, stride), $\mathsf{Downsample}$(stride), $\mathsf{Dropout}(p)$

\item[Optimizers:] \hfill\\
$\mathsf{SGD}$, $\mathsf{GradientDescent}$, $\mathsf{AdaGrad}$, $\mathsf{AdaDelta}$, $\mathsf{RMSProp}$
\end{description}
\section{Descriptions of ImageNet Models}
\label{sec:descriptions}
\subsection{Description of ResNet-152 on ImageNet ($4.49$\% top-5 error)}

{
\small 

\begin{description}[align=left]
\item[Batch-Normalization:] \hfill \\
  at each neuron $x$ apply 
  \begin{flalign*}
  &\mathsf{BN}(x) = b + g \cdot (x - \mu) / \sqrt{\sigma^2 + 0.01} &
  \end{flalign*}
  $\mu, \sigma^2:$ batch mean, variance of $x$ (test time use train set stats)\\
  $b, g:$ trainable, init $b = 0, g = 1$

\item[Architecture:] \hfill \\ 
$\mathsf{Layer}(k, s)$:

\begin{tikzpicture}[
rectnode/.style={rectangle, draw=black, thin,  minimum size=5mm},
layernode/.style={rectangle, draw=black, thin,  minimum size=5mm},
circlenode/.style={circle, draw=black, thin,  minimum size=1mm},
align=center,node distance=0.3cm,
]
\small
\node[]      (input)           {};
\node[rectnode] (bn1)      [right=of input]      {$\mathsf{BN}$};
\node[rectnode] (relu1)      [right=of bn1]      {$\mathsf{ReLU}$};
\node[rectnode]    (conv1)           [right=of relu1] {$\mathsf{Conv}(1\text{x}1, 4k, s)$};
\node[rectnode] (bn2)      [right=of conv1]      {$\mathsf{BN}$};
\node[rectnode] (relu2)      [right=of bn2]      {$\mathsf{ReLU}$};
\node[rectnode]    (conv2)           [right=of relu2] {$\mathsf{Conv}(3\text{x}3, k, s)$};
\node[]             (output)            [right=of conv2]    {};

\draw[->, thick]  (input) -- (bn1);
\draw[->, thick] (bn1) -- (relu1);
\draw[->, thick] (relu1) -- (conv1);
\draw[->, thick] (conv1) -- (bn2);
\draw[->, thick] (bn2) -- (relu2);
\draw[->, thick] (relu2) -- (conv2);
\draw[->, thick] (conv2) -- (output);
\end{tikzpicture}

$\mathsf{Block}(k, s)$:

\begin{tikzpicture}[
downsamplenode/.style={rectangle, draw=black, thin, minimum size=5mm},
layernode/.style={rectangle, draw=black, thin, minimum size=5mm},
addnode/.style={circle, draw=black, thin, minimum size=5mm},
align=center,node distance=0.3cm,
]
\small
\node[]      (input)           {};
\node[downsamplenode] (downsample)      [right=of input]      {$\mathsf{downsample}(s)$};
\node[layernode]    (layer_1)           [right=of downsample] {$\mathsf{Layer}(1\text{x}1, k, s)$};
\node[layernode]    (layer_2)           [right=of layer_1]    {$\mathsf{Layer}(3\text{x}3, k, 1)$};
\node[layernode]    (layer_3)           [right=of layer_2]    {$\mathsf{Layer}(1\text{x}1, 4k, 1)$};
\node[addnode]      (add)               [right=of layer_3]    {$+$};
\node[]             (output)            [right=of add]    {};
\node[] (phantom_1) [above=of downsample] {};
\node[] (phantom_2) at (add |- phantom_1) {};

\draw[->, very thick]  (input) -- (downsample);
\draw[->, very thick] (downsample) -- (layer_1);
\draw[->, very thick] (layer_1) -- (layer_2);
\draw[->, very thick] (layer_2) -- (layer_3);
\draw[->, very thick] (layer_3) -- (add);
\draw[->, very thick] (layer_3) -- (add);
\draw[->, very thick] (add)  -- (output);
\draw[very thick]  (downsample) -- (phantom_1);
\draw[very thick]  (phantom_1.south) -- (phantom_2.south);
\draw[->, very thick]  (phantom_2) -- (add);
\end{tikzpicture}




\item[Forward-Pass:] \hfill\\
$k=64$\\
$\mathsf{Conv}(7, 64, 2) ~\rightarrow ~\mathsf{MaxPool}(3\times3, 2)~\rightarrow ~\mathsf{Block}(k, 2) \times 3 \rightarrow ~\mathsf{Block}(2k) \times 8 ~\rightarrow$\\
$ ~\mathsf{Block}(4k, 2) \times 36 ~\rightarrow ~\mathsf{Block}(8k, 2)\times 3 \rightarrow ~\mathsf{AvgPool}(global) ~\rightarrow ~\mathsf{FullyConnected(1000)} ~\rightarrow ~\mathsf{SoftMax}$\\

\item[Initialization:] ~~~ $N(0, 2 / \text{fan-in})$, bias $0$\\
\item[Data-Augmentation] \hfill \\ 
scale pixels to $0$ mean  unit variance\\
resize its shorter side to $256$\\
random crop $224\times224$ with horizontal flip \\
SVD $3\times 3$ covariance matrix of RBG pixels over training set: $\lambda_i, \mathbf{v}_i$\\
sample $\alpha_i\sim N(0, 0.01)$ for each image, add $\sum_i \alpha_i \lambda_i \mathbf{v}_i$ to pixels, re-sample every epoch
\item[Training] \hfill\\
SGD(batchsize=256, weight-decay=1e-4, momentum=0.9, iteration=60e4) \\
learningrate init 0.1,  learningrate $/=10$ every $30$ epochs\\
\item[Testing] \hfill\\
	full convolution at 224 256 384 480 640 with horizontal flips, average logits
\end{description}  
\subsection{Description of DenseNet-264 on ImageNet ($5.29\%$ top-5 error)}

{
\small 

\begin{description}[align=left]
\item[Batch-Normalization:] \hfill \\
  at each neuron $x$ apply 
  \begin{flalign*}
  &\mathsf{BN}(x) = b + g \cdot (x - \mu) / \sqrt{\sigma^2 + 0.01} &
  \end{flalign*}
  $\mu, \sigma^2:$ batch mean, variance of $x$ (test time use train set stats)\\
  $b, g:$ trainable, init $b = 0, g = 1$

\item[Architecture:] \hfill \\ 
$\mathsf{Layer}(k):$\\

\begin{tikzpicture}[
rectnode/.style={rectangle, draw=black, thin,  minimum size=5mm},
layernode/.style={rectangle, draw=black, thin,  minimum size=5mm},
circlenode/.style={circle, draw=black, thin,  minimum size=1mm},
align=center,node distance=0.5cm,
]
\node[]      (input)           {};
\node[rectnode] (BNReLU1)      [right=of input]      {$\mathsf{BN}+\mathsf{ReLU}$};
\node[rectnode]    (conv1)           [right=of BNReLU1] {$\mathsf{Conv}(1\text{x}1, 4k)$};
\node[rectnode] (BNReLU2)      [right=of conv1]      {$\mathsf{BN}+\mathsf{ReLU}$};
\node[rectnode]    (conv2)           [right=of BNReLU2] {$\mathsf{Conv}(3\text{x}3, k)$};
\node[]             (output)            [right=of conv2]    {};

\draw[->, thick]  (input.east) -- (BNReLU1.west);
\draw[->, thick] (BNReLU1.east) -- (conv1.west);
\draw[->, thick] (conv1.east) -- (BNReLU2.west);
\draw[->, thick] (BNReLU2.east) -- (conv2.west);
\draw[->, thick] (conv2.east) -- (output.west);
\end{tikzpicture}\\

$\mathsf{Transit}(k):$\\

\begin{tikzpicture}[
rectnode/.style={rectangle, draw=black, thin,  minimum size=5mm},
layernode/.style={rectangle, draw=black, thin,  minimum size=5mm},
circlenode/.style={circle, draw=black, thin,  minimum size=1mm},
align=center,node distance=0.5cm,
]
\node[]      (input)           {};
\node[rectnode]    (conv)           [right=of input] {$\mathsf{Conv}(1\text{x}1, k)$};
\node[rectnode] (pool)      [right=of conv]      {$\mathsf{AvgPool}(2\times2, 2)$};
\node[]             (output)            [right=of pool]    {};

\draw[->, thick]  (input.east) -- (conv.west);
\draw[->, thick] (conv.east) -- (pool.west);
\draw[->, thick] (pool.east) -- (output.west);
\end{tikzpicture}\\

$\mathsf{Block}(k, r):$\\

\begin{tikzpicture}[
downsamplenode/.style={rectangle, draw=black, thin,  minimum size=5mm},
layernode/.style={rectangle, draw=black, thin,  minimum size=5mm},
circlenode/.style={circle, draw=black, thin,  minimum size=1mm},
align=center,node distance=1cm,
]
\node[]      (input)           {};
\node[] (downsample)      [right=of input]      {};
\node[circlenode]    (layer_1)           [right=of downsample] {$1$};
\node[circlenode]    (layer_2)           [right=of layer_1]    {$2$};
\node[]             (abbr)              [right=of layer_2]     {$\cdots$};
\node[circlenode]    (layer_r)           [right=of abbr]    {$r$};
\node[]             (output)            [right=of layer_r]    {};

\draw[ thick]  (input.east) -- (downsample.east);
\draw[->, thick] (downsample.east) -- (layer_1.west);
\draw[->, thick] (layer_1.east) -- (layer_2.west);
\draw[->, thick] (layer_2.east) -- (abbr.west);
\draw[->,  thick] (abbr.east) -- (layer_r.west);
\draw[->,  thick] (layer_r.east) -- (output.west);
\draw[->, thick] (downsample.east) to[out=30,in=150] (layer_2);
\draw[->, thick] (downsample.east) to[out=35,in=145] (layer_r);
\draw[->, thick] (layer_1.east) to[out=25,in=155] (layer_r);
\draw[->, thick] (layer_2.east) to[out=20,in=160] (layer_r);
\end{tikzpicture} ~~~ where $\circled{$i$}:=\mathsf{Layer}(k)$, joining arrows are channel-wise $\mathsf{concat}$.\\

\item[Forward-Pass:] \hfill\\
$k=32$\\
$\mathsf{Conv}(7\text{x}7, 2k, 2) \rightarrow \mathsf{MaxPool}(2) \rightarrow \mathsf{Block}(k, 6) \rightarrow \mathsf{Transit}(k/2) \rightarrow \mathsf{Block}(k, 12) \rightarrow $\\
$\mathsf{Block}(k, 64) \rightarrow  \mathsf{Transit}(k/2, 48) \rightarrow \mathsf{AvgPool}(\text{global}) \rightarrow \mathsf{FullyConnected}(1000) \rightarrow \mathsf{SoftMax}$

\item[Initialization:] ~~~ $N(0, 2 / \text{fan-in})$, bias $0$\\
\item[Data-Augmentation] \hfill \\ 
scale pixels to $0$ mean  unit variance\\
resize its shorter side to $256$\\
random crop $224\times224$ with horizontal flip \\
SVD $3\times 3$ covariance matrix of RBG pixels over training set: $\lambda_i, \mathsf{v}_i$\\
sample $\alpha_i\sim N(0, 0.01)$ for each image, add $\sum_i \alpha_i \lambda_i \mathsf{v}_i$ to pixels, re-sample every epoch
\item[Training] \hfill\\
batchsize 256 weight-decay 1e-4 momentum 0.9 iteration 60e4 \\
learningrate init 0.1, learningrate $/=10$ every $30$ epochs\\
\item[Testing] \hfill\\
average the logits on the four corners and center $224\times224$ crop with horizontal flip \\

\end{description}


\end{document}